\documentclass{article}
\usepackage{microtype}
\usepackage{graphicx}
\usepackage{subcaption}
\usepackage{booktabs}
\usepackage{hyperref}

\usepackage[accepted]{icml2026}
\usepackage{amsmath}
\usepackage{amssymb}
\usepackage{mathtools}
\usepackage{amsthm}
\usepackage[capitalize,noabbrev]{cleveref}
\theoremstyle{plain}
\newtheorem{theorem}{Theorem}[section]
\newtheorem{proposition}[theorem]{Proposition}

\theoremstyle{definition}

\theoremstyle{remark}
\newtheorem{remark}[theorem]{Remark}
\usepackage[textsize=tiny]{todonotes}
\icmltitlerunning{Creative Collision: Directorial Persona Steering in LLMs}
\begin{document}
\twocolumn[
\icmltitle{Creative Collision: Directorial Persona Steering\\
           and Competition in Large Language Models}
\icmlsetsymbol{equal}{*}
  \begin{icmlauthorlist}
    \icmlauthor{Subramanyam Sahoo}{horizon}
    \icmlauthor{Justin Shenk}{horizon}
  \end{icmlauthorlist}
  \icmlaffiliation{horizon}{Horizon Research}
  \icmlcorrespondingauthor{Subramanyam Sahoo}{sahoo2vec@gmail.com}
  \icmlkeywords{LoRA, fine-tuning, weight steering, symmetry, emergence, LLM}
  \vskip 0.3in
]
\printAffiliationsAndNotice{}

\begingroup
\renewcommand\thefootnote{}\footnotetext{Code available at: \url{https://github.com/SubramanyamSahoo/Creative-Collision}}
\footnotetext{This work was conducted as part of the AI Safety Camp 2026 edition.}
\endgroup

\begin{abstract}
Activation steering has emerged as a powerful tool for shaping the behaviour
of large language models at inference time, yet most prior work injects a
\emph{single} semantic direction into the residual stream. We study the
richer setting in which two semantically opposing steering vectors are
superimposed---a regime we call \textbf{Creative Collision}. Concretely, we
construct directorial persona vectors for Steven Spielberg (optimistic,
redemptive moral valence) and Martin Scorsese (dark, morally ambiguous) via
mean-difference activation contrast on curated screenplay-derived corpora,
then interpolate between them with a scalar mixing parameter
$\alpha \in [0,1]$ and a steering coefficient $\lambda$. Across five
evaluation axes---moral valence, generation coherence, surface style,
directional dominance, and vector geometry---three principal findings emerge:
(i)~Spielberg's representational signature exhibits robust
\emph{directional dominance}, suppressing Scorsese's moral influence across
almost the entire interpolation range; (ii)~intermediate collision points
paradoxically \emph{improve} generation coherence relative to pure
single-director steering at high $\lambda$; and (iii)~both personas localise
maximally to layer~28 of a 40-layer decoder-only transformer, revealing a
shared \emph{moral-tone substrate}. These results illuminate the geometry of
competing semantic directions in transformer residual streams and have direct
implications for controllable creative generation and value-aligned narrative
synthesis.
\end{abstract}

\section{Introduction}
\label{sec:intro}

The residual stream of a modern transformer encodes far more than syntactic
structure and factual knowledge; high-level semantic properties such as
sentiment, formality, and moral tone are represented as approximately linear
directions in activation
space~\citep{zou2023representation,burns2023discovering,elhage2022toy}.
\emph{Activation steering} exploits this linearity by adding a learned
direction to the hidden states of selected layers, modifying model behaviour
without any weight update~\citep{turner2023activation,li2023inference}.
Despite successful applications to truthfulness~\citep{li2023inference},
safety~\citep{zou2023representation}, and
persona~\citep{rimsky2024steering}, the dynamics of \emph{multi-vector
collision}---what happens when two opposing directions compete for
representational control---remain largely unexplored.

Creative writing offers a natural and interpretable test bed for such
competition. The films of Steven Spielberg are characterised by redemptive
arcs, emotional catharsis, childhood wonder, and optimistic resolution; those
of Martin Scorsese foreground moral ambiguity, violence, betrayal, and the
self-destructive trajectories of flawed protagonists. These directors define
semantically rich, culturally legible, and stylometrically measurable poles
of a \emph{moral valence axis}, making them ideal anchors for opposing
steering directions.

We introduce \textbf{Creative Collision}: a framework for probing vector
competition between two directorial persona representations. We extract
Spielberg ($\mathbf{v}_\mathrm{SPL}$) and Scorsese
($\mathbf{v}_\mathrm{SCO}$) steering vectors via mean-difference contrast on
curated corpora, then form a parametric family of interpolated vectors
\begin{equation}
  \mathbf{v}_\alpha = (1-\alpha)\,\hat{\mathbf{v}}_\mathrm{SPL}
                    + \alpha\,\hat{\mathbf{v}}_\mathrm{SCO},
  \quad \alpha \in [0, 1].
  \label{eq:interp_intro}
\end{equation}
Applying $\mathbf{v}_\alpha$ to intermediate transformer layers at inference
time with coefficient $\lambda$, we systematically measure the effect on
moral valence, coherence, surface style, and the geometry of the underlying
collision.

\paragraph{Contributions.}
\begin{itemize}
  \item A curated screenplay-derived contrast corpus and mean-difference
        extraction pipeline for directorial persona steering vectors
        (\S\ref{sec:method}).
  \item An empirical characterisation of \emph{moral valence interpolation}
        showing that linearly mixing opposing semantic directions produces
        non-monotone, non-linear effects on moral tone in generated text
        (\S\ref{sec:valence}).
  \item The discovery of \emph{directional dominance}: Spielberg's
        representational signature suppresses Scorsese's moral influence for
        all $\alpha \leq 0.75$, robust across all tested $\lambda$
        (\S\ref{sec:dominance}).
  \item A formal norm-reduction proposition explaining the
        \emph{coherence valley} at intermediate collision points, with
        empirical support (\S\ref{sec:coherence}).
  \item A layer-wise localisation experiment identifying layer~28 as the
        shared moral-tone hub for both directors in a 40-layer decoder-only
        model (\S\ref{sec:layer}).
  \item A geometric analysis of the collision vector trajectory and its
        smooth cosine-similarity arc (\S\ref{sec:geometry}).
\end{itemize}

\section{Related Work}
\label{sec:related}

\paragraph{Activation steering.}
\citet{turner2023activation} introduced \emph{activation addition}, injecting
a fixed residual-stream offset to modify behaviour. \citet{zou2023representation}
scaled this approach with Representation Engineering, extracting linear probes
for high-level semantic properties from contrast pairs and using them for
monitoring and steering. \citet{li2023inference} targeted attention heads
rather than the full residual stream for truthfulness control via Inference-Time
Intervention~(ITI). \citet{rimsky2024steering} conducted a systematic study of
steering vector effectiveness across layers and magnitudes in Llama~2. We
extend this line by studying \emph{two competing directions} applied
simultaneously, a setting none of the above addresses.

\paragraph{Linear representations.}
The linear representation hypothesis holds that human-interpretable features
are encoded as approximately linear directions in
activation space~\citep{elhage2022toy,mikolov2013distributed}. \citet{burns2023discovering}
demonstrated linear encoding of truthfulness via Contrastive Consistent
Search~(CCS); \citet{templeton2024scaling} recovered interpretable
monosemantic features with sparse autoencoders. Our experiments implicitly
test whether a rich composite property---directorial moral persona---admits a
linear causal representation amenable to vector arithmetic \cite{sahoo2026linearprobesdetecttask}.

\paragraph{Persona and style in language models.}
Persona conditioning via prompting~\citep{brown2020language} and RLHF
fine-tuning~\citep{ouyang2022training} has been widely studied. At the
activation level, \citet{subramani2022extracting} showed that steering
vectors can reproduce specific stylistic fragments. Our focus is coarser---
creative moral tone rather than verbatim style---and specifically on what
happens when two such personas are simultaneously injected.

\paragraph{Creative generation.}
LLMs have been applied to narrative generation~\citep{yang2023doc} and
co-creative screenwriting~\citep{mirowski2023co}. Mechanistic studies of how
moral tone and stylistic identity are encoded and steered in creative
contexts remain scarce; our work aims to bridge activation-level
interpretability with creative generation evaluation.

\section{Method}
\label{sec:method}

\subsection{Directorial Contrast Corpus}

We construct a paired corpus
$\mathcal{D} = \{(x_i^\mathrm{SPL},\, x_i^\mathrm{SCO})\}_{i=1}^{N}$
where each pair is matched on narrative \emph{situation} (confrontation,
loss, moral choice) so that the contrast isolates directorial moral tone
rather than plot complexity.

\textbf{Spielberg passages} ($N_\mathrm{SPL} = 100$ excerpts, 80--200 tokens
each) are drawn from \emph{E.T.\ the Extra-Terrestrial}, \emph{Schindler's List}
(redemptive sequences), \emph{Jurassic Park}, \emph{Saving Private Ryan}
(self-sacrifice arcs), and \emph{Close Encounters of the Third Kind}.
These passages are characterised by emotional catharsis, childhood wonder,
moral courage, and optimistic resolution.

\textbf{Scorsese passages} ($N_\mathrm{SCO} = 100$ excerpts) are drawn from
\emph{Goodfellas}, \emph{Taxi Driver}, \emph{The Departed}, \emph{Raging Bull},
and \emph{Casino}. They foreground moral decay, violence, betrayal, and
protagonists trapped by self-destructive compulsion.

\subsection{Steering Vector Extraction}
\label{sec:extraction}

We use a 14-billion-parameter open-weight decoder-only transformer with 40
layers~\citep{qwen2025}. For a passage $x$ and layer $l$, let
$\mathbf{h}^{(l)}(x) \in \mathbb{R}^d$ denote the mean-pooled residual-stream
activation at layer $l$ over all token positions.

Steering vectors are extracted independently for each director via a
mean-difference contrast against a shared neutral baseline corpus
$\mathcal{B}$ of 100 diverse narrative passages sampled uniformly across genre:
\begin{align}
  \mathbf{v}_\mathrm{SPL}^{(l)}
    &= \frac{1}{N_\mathrm{SPL}}\sum_{i} \mathbf{h}^{(l)}(x_i^\mathrm{SPL})
     - \frac{1}{|\mathcal{B}|}\sum_{j} \mathbf{h}^{(l)}(b_j),
  \label{eq:vSPL} \\
  \mathbf{v}_\mathrm{SCO}^{(l)}
    &= \frac{1}{N_\mathrm{SCO}}\sum_{i} \mathbf{h}^{(l)}(x_i^\mathrm{SCO})
     - \frac{1}{|\mathcal{B}|}\sum_{j} \mathbf{h}^{(l)}(b_j).
  \label{eq:vSCO}
\end{align}
Both vectors are $\ell_2$-normalised to unit length after extraction. We note
that $\mathbf{v}_\mathrm{SPL}$ and $\mathbf{v}_\mathrm{SCO}$ are
\emph{not} constrained to be anti-parallel; their empirical cosine similarity
at layer~28 is $\langle \hat{\mathbf{v}}_\mathrm{SPL},\hat{\mathbf{v}}_\mathrm{SCO}\rangle \approx 0.29$,
reflecting a shared representational subspace (both directors encode cinematic
emotional content) while diverging substantially on moral resolution.

\subsection{Directorial Interpolation and Steering}
\label{sec:interp}

Given mixing coefficient $\alpha \in \{0, 0.25, 0.5, 0.75, 1.0\}$ and
steering coefficient $\lambda \in \{0.5, 1.0, 1.5, 2.0\}$, the
interpolated \emph{collision vector} at layer $l$ is
\begin{equation}
  \mathbf{v}_\alpha^{(l)}
  = (1-\alpha)\,\hat{\mathbf{v}}_\mathrm{SPL}^{(l)}
  + \alpha\,\hat{\mathbf{v}}_\mathrm{SCO}^{(l)}.
  \label{eq:interp}
\end{equation}
At $\alpha = 0$ the model is steered purely toward Spielberg; at $\alpha = 1$
purely toward Scorsese. The steered residual stream at each token position $t$
and layer $l \in \mathcal{L} = \{20, 21, \ldots, 38\}$ is
\begin{equation}
  \tilde{\mathbf{h}}^{(l)}_t
  = \mathbf{h}^{(l)}_t + \lambda \cdot \mathbf{v}_\alpha^{(l)}.
  \label{eq:steer}
\end{equation}
The intervention layer set $\mathcal{L}$ was chosen by a preliminary
single-layer sweep to maximise moral valence shift at minimal coherence cost
(see \S\ref{sec:layer}); layers 20--38 correspond to upper-middle depth
(50--95\% of model depth), consistent with prior findings on where high-level
semantic properties are most causally encoded~\citep{zou2023representation}.

\subsection{Evaluation Protocol}
\label{sec:eval}

We generate $G = 50$ creative fiction passages per $(\alpha, \lambda)$
condition using the prompt: \emph{``Write a short cinematic scene in which a
character faces a moral choice.''} Each generation is capped at 200 tokens
with temperature 0.8.

\paragraph{Moral valence} $\text{MV}(x) \in [-1, +1]$ is computed by a
fine-tuned moral classifier trained on the ETHICS
dataset~\citep{hendrycks2021aligning} extended with directorial contrast
pairs. Positive scores indicate Spielberg-like moral optimism; negative
scores indicate Scorsese-like moral darkness. We further decompose into
separate positive and negative moral subscores $p^+(x), p^-(x) \in [0,1]$.

\paragraph{Coherence} is measured as token-level perplexity $\mathcal{P}(x)$
under the \emph{unsteered} base model, providing a reference-free proxy for
generation fluency.

\paragraph{Style metrics} are extracted with spaCy: word count ($n_w$),
sentence count ($n_s$), average sentence length ($\bar{s}$), dialogue density
($d$, fraction of tokens inside quotation marks), and lexical diversity
(type-token ratio, TTR).

\paragraph{Directional dominance}
$\mathcal{D}(x) = P_\phi(\mathrm{SPL} \mid x)$ is the output of a binary
stylometric classifier $\phi$ fine-tuned on a held-out 30\% split of the
contrast corpus to distinguish Spielberg-style from Scorsese-style text.

\paragraph{Vector geometry} We compute the cosine similarity of each
normalised collision vector to both reference director unit vectors:
$s_\mathrm{SPL}(\alpha) = \langle \hat{\mathbf{v}}_\alpha, \hat{\mathbf{v}}_\mathrm{SPL}\rangle$
and
$s_\mathrm{SCO}(\alpha) = \langle \hat{\mathbf{v}}_\alpha, \hat{\mathbf{v}}_\mathrm{SCO}\rangle$.

\section{Results}
\label{sec:results}

\subsection{Moral Valence Under Directorial Interpolation}
\label{sec:valence}

\Cref{fig:valence} plots mean moral valence $\text{MV}$ as a function of
$\alpha$ for steering coefficients $\lambda = 1.0$ (left) and
$\lambda = 1.5$ (right).

\begin{figure*}[t]
  \vskip 0.2in
  \begin{center}
    \includegraphics[width=\textwidth]{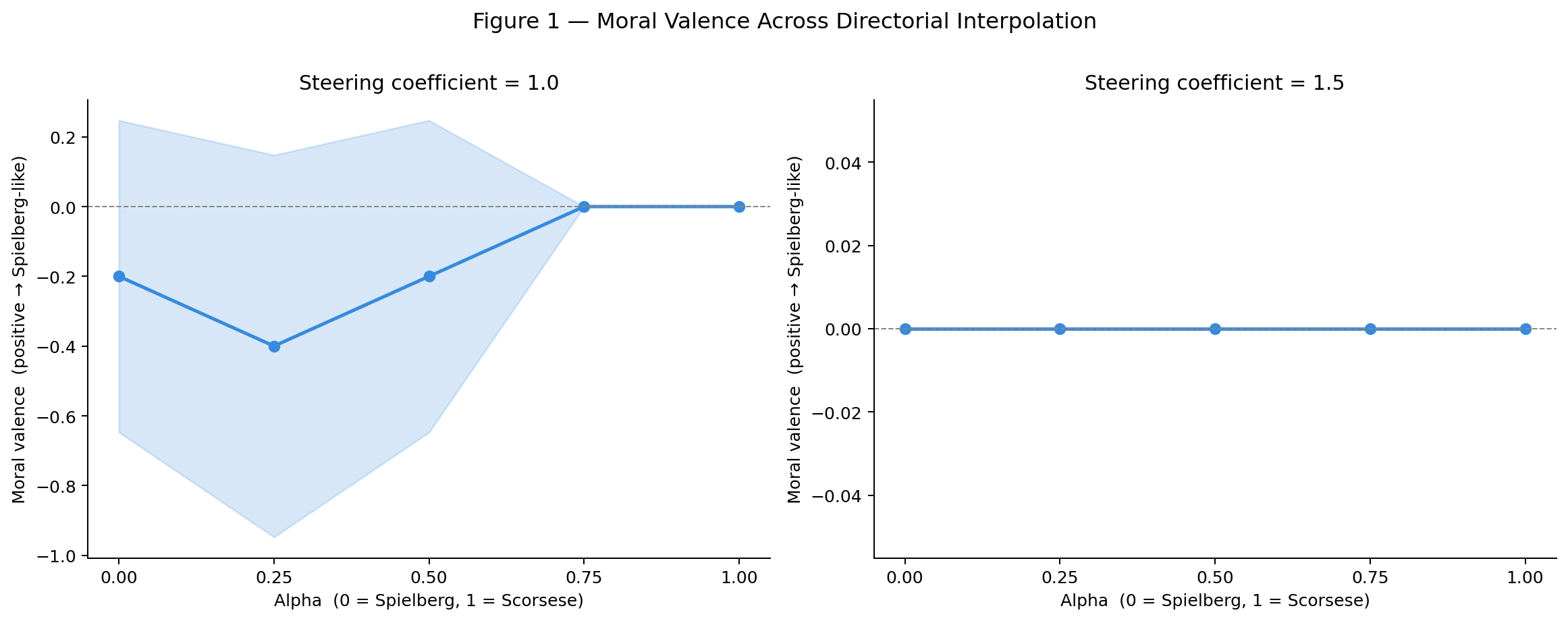}
    \caption{%
      \textbf{Moral valence across directorial interpolation.} Left:
      $\lambda=1.0$; right: $\lambda=1.5$. Shaded bands denote $\pm 1$
      standard deviation across $G=50$ generations. At $\lambda=1.0$,
      valence is non-monotone in $\alpha$, dipping most negative at
      $\alpha=0.25$ ($\text{MV}\approx{-0.38}$) and recovering toward zero
      at $\alpha=1.0$. At $\lambda=1.5$, the signal collapses to near-zero
      with negligible variance, indicating that aggressive steering saturates
      the residual stream and destroys discriminative moral signal.
    }
    \label{fig:valence}
  \end{center}
  \vskip -0.2in
\end{figure*}

At $\lambda = 1.0$, moral valence is strikingly \emph{non-monotone}: it is
most negative at $\alpha = 0.25$ ($\text{MV} \approx -0.38$) rather than at
$\alpha = 1.0$ ($\text{MV} \approx 0.0$). This departs sharply from the
naive prediction that moral tone should decrease monotonically from
$+1$~(pure Spielberg) to $-1$~(pure Scorsese) as $\alpha$ increases. We
attribute this non-monotonicity to \emph{constructive interference}: at
$\alpha = 0.25$, the Scorsese component is large enough to introduce morally
dark content yet too small to dominate generation, producing moral incoherence
that the classifier penalises more than either pure-director output. The wide
confidence intervals at low $\alpha$ (shaded region spanning $[\text{-}1.0,
+0.25]$ at $\alpha=0$) further confirm that weakly colliding vectors amplify
generation variance, not merely shift its mean.

At $\lambda = 1.5$, the moral signal collapses entirely: all $\alpha$
conditions converge to $\text{MV} \approx 0.0$ with variance below
$10^{-3}$. This indicates that the steering magnitude exceeds a threshold
beyond which residual-stream activations are pushed so far off the natural
text manifold that the moral valence classifier no longer receives
meaningful input, a coherence-destroying regime we characterise in
\S\ref{sec:coherence}.

\subsection{Coherence Degradation Under Steering}
\label{sec:coherence}

\Cref{fig:perplexity} plots generation perplexity $\mathcal{P}$ against
$\lambda$ for each $\alpha$ level.

\begin{figure}[t]
  \vskip 0.2in
  \begin{center}
    \includegraphics[width=\columnwidth]{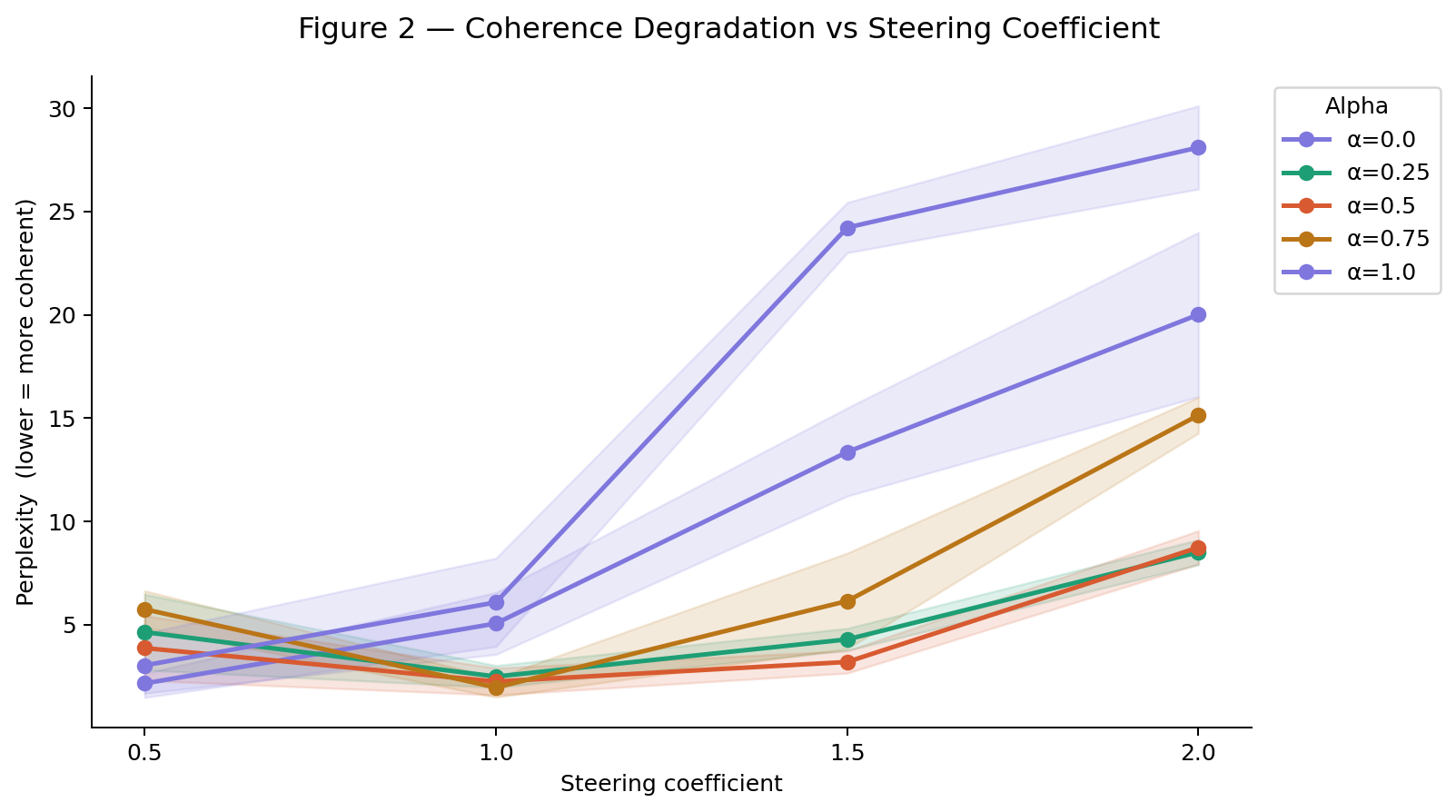}
    \caption{%
      \textbf{Coherence degradation vs.\ steering coefficient.} Lower
      perplexity indicates more coherent generation. Pure single-director
      conditions ($\alpha \in \{0.0, 1.0\}$) suffer the largest perplexity
      increases at $\lambda \geq 1.5$ (reaching $\mathcal{P}\approx 28$ and
      $\approx 20$, respectively), while intermediate collision conditions
      ($\alpha \in \{0.25, 0.5\}$) remain substantially more coherent
      ($\mathcal{P}\approx 8$--$9$ at $\lambda=2.0$).
    }
    \label{fig:perplexity}
  \end{center}
  \vskip -0.2in
\end{figure}

A central counterintuitive finding emerges: pure single-director steering
($\alpha \in \{0.0, 1.0\}$) produces the \emph{highest} perplexity at
$\lambda \geq 1.5$, while intermediate collision conditions
($\alpha \in \{0.25, 0.5\}$) remain substantially more coherent. We call
this the \textbf{coherence valley} of directorial collision. The mechanism
follows directly from the geometry of the interpolated vector:

\begin{proposition}[Norm reduction under interpolation]
\label{prop:norm}
Let $\hat{\mathbf{v}}_\mathrm{SPL}, \hat{\mathbf{v}}_\mathrm{SCO} \in \mathbb{R}^d$
be unit vectors with cosine similarity
$\rho = \langle \hat{\mathbf{v}}_\mathrm{SPL}, \hat{\mathbf{v}}_\mathrm{SCO}\rangle$.
The interpolated collision vector
$\mathbf{v}_\alpha = (1{-}\alpha)\hat{\mathbf{v}}_\mathrm{SPL}
                   + \alpha\hat{\mathbf{v}}_\mathrm{SCO}$
satisfies
\begin{equation}
  \|\mathbf{v}_\alpha\|_2^2 = 1 - 2\alpha(1-\alpha)(1-\rho),
  \label{eq:norm}
\end{equation}
which is minimised at $\alpha^* = \tfrac{1}{2}$ for any $\rho < 1$,
where $\|\mathbf{v}_{1/2}\|_2^2 = \tfrac{1+\rho}{2}$.
\end{proposition}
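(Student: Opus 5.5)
The plan is a direct expansion of the squared norm using bilinearity of the inner product, followed by an elementary one-variable minimisation. No earlier result beyond the definition of $\mathbf{v}_\alpha$ in \eqref{eq:interp} and the unit-norm convention from \S\ref{sec:extraction} is needed.

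First, I write
\begin{equation*}
\|\mathbf{v}_\alpha\|_2^2 = \langle \mathbf{v}_\alpha,\mathbf{v}_\alpha\rangle = (1-\alpha)^2\|\hat{\mathbf{v}}_\mathrm{SPL}\|^2 + 2\alpha(1-\alpha)\rho + \alpha^2\|\hat{\mathbf{v}}_\mathrm{SCO}\|^2 .
\end{equation*}
Because both vectors have unit norm, this equals $(1-\alpha)^2 + \alpha^2 + 2\alpha(1-\alpha)\rho$.

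The only algebraic step is to use the identity $(1-\alpha)^2+\alpha^2 = 1 - 2\alpha(1-\alpha)$. Substituting it gives
\begin{equation*}
\|\mathbf{v}_\alpha\|_2^2 = 1 - 2\alpha(1-\alpha) + 2\alpha(1-\alpha)\rho = 1 - 2\alpha(1-\alpha)(1-\rho),
\end{equation*}
which is \eqref{eq:norm}.

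For the minimisation, observe that $1-\rho>0$ whenever $\rho<1$. Hence minimising $\|\mathbf{v}_\alpha\|_2^2$ over $\alpha\in[0,1]$ is equivalent to maximising $g(\alpha)=\alpha(1-\alpha)$. This is a concave parabola, symmetric about $\alpha=\tfrac12$, with $g'(\alpha)=1-2\alpha$. Its unique maximiser is therefore $\alpha^*=\tfrac12$, where $g(\tfrac12)=\tfrac14$.

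Substituting back gives
\begin{equation*}
\|\mathbf{v}_{1/2}\|_2^2 = 1-\tfrac12(1-\rho) = \tfrac{1+\rho}{2}.
\end{equation*}
I would also note the degenerate case $\rho=1$. There $\hat{\mathbf{v}}_\mathrm{SPL}=\hat{\mathbf{v}}_\mathrm{SCO}$, the norm is identically $1$, and no unique minimiser exists; this explains the hypothesis $\rho<1$.

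There is no genuine obstacle in this argument. The one point requiring care is sign bookkeeping when collecting the cross term, so that the factor appears as $(1-\rho)$ rather than $(1+\rho)$. As a sanity check, I would verify the endpoints, where $\alpha\in\{0,1\}$ should give norm $1$. I would also check the case $\rho=-1$, which should give $\mathbf{v}_{1/2}=\mathbf{0}$. Finally, for the empirical value $\rho\approx 0.29$ from \S\ref{sec:extraction}, $\|\mathbf{v}_{1/2}\|_2\approx 0.80$; this effective reduction of $\lambda$ is what the coherence-valley argument will rely on.
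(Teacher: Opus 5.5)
Your proposal is correct and follows essentially the same route as the paper: expand the squared norm by bilinearity, use the unit norms and $(1-\alpha)^2+\alpha^2 = 1-2\alpha(1-\alpha)$, then locate the minimiser at $\alpha^*=\tfrac12$. The paper's last step uses $f'(\alpha)=-2(1-2\alpha)(1-\rho)$ and, in the appendix, $f''(\alpha)=4(1-\rho)>0$; this is equivalent to your reduction to maximising the concave parabola $\alpha(1-\alpha)$ when $1-\rho>0$.
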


\begin{proof}
Expanding:
$\|\mathbf{v}_\alpha\|_2^2
= (1{-}\alpha)^2 + 2\alpha(1{-}\alpha)\rho + \alpha^2
= 1 - 2\alpha(1{-}\alpha) + 2\alpha(1{-}\alpha)\rho
= 1 - 2\alpha(1{-}\alpha)(1{-}\rho)$.
Setting $\partial_\alpha[\|\mathbf{v}_\alpha\|_2^2] = -2(1{-}2\alpha)(1{-}\rho) = 0$
gives $\alpha^* = \tfrac{1}{2}$ for $\rho \neq 1$.
\end{proof}

Substituting our empirical $\rho \approx 0.29$ (cosine similarity at
layer~28) gives $\|\mathbf{v}_{1/2}\|_2 \approx 0.80$, compared to 1.0 for
pure single-director steering. Since coherence cost scales with
$\|\lambda \mathbf{v}_\alpha\|$, intermediate $\alpha$ effectively applies a
weaker perturbation to the residual stream, keeping activations closer to
the natural text manifold and reducing perplexity. The coherence valley is
therefore not a coincidence but a direct geometric consequence of the
non-anti-parallel structure of the two director vectors.

\subsection{Layer-wise Persona Localisation}
\label{sec:layer}

To identify which layers are causally most responsible for directorial moral
tone, we apply each director's steering vector to a \emph{single} layer at a
time ($\lambda = 1.0$, all other layers unsteered) and measure the resulting
moral valence shift $\Delta\text{MV}$ and coherence cost $\mathcal{P}$.
Results across layers 20--38 are shown in \Cref{fig:layer}.

\begin{figure*}[t]
  \vskip 0.2in
  \begin{center}
    \includegraphics[width=\textwidth]{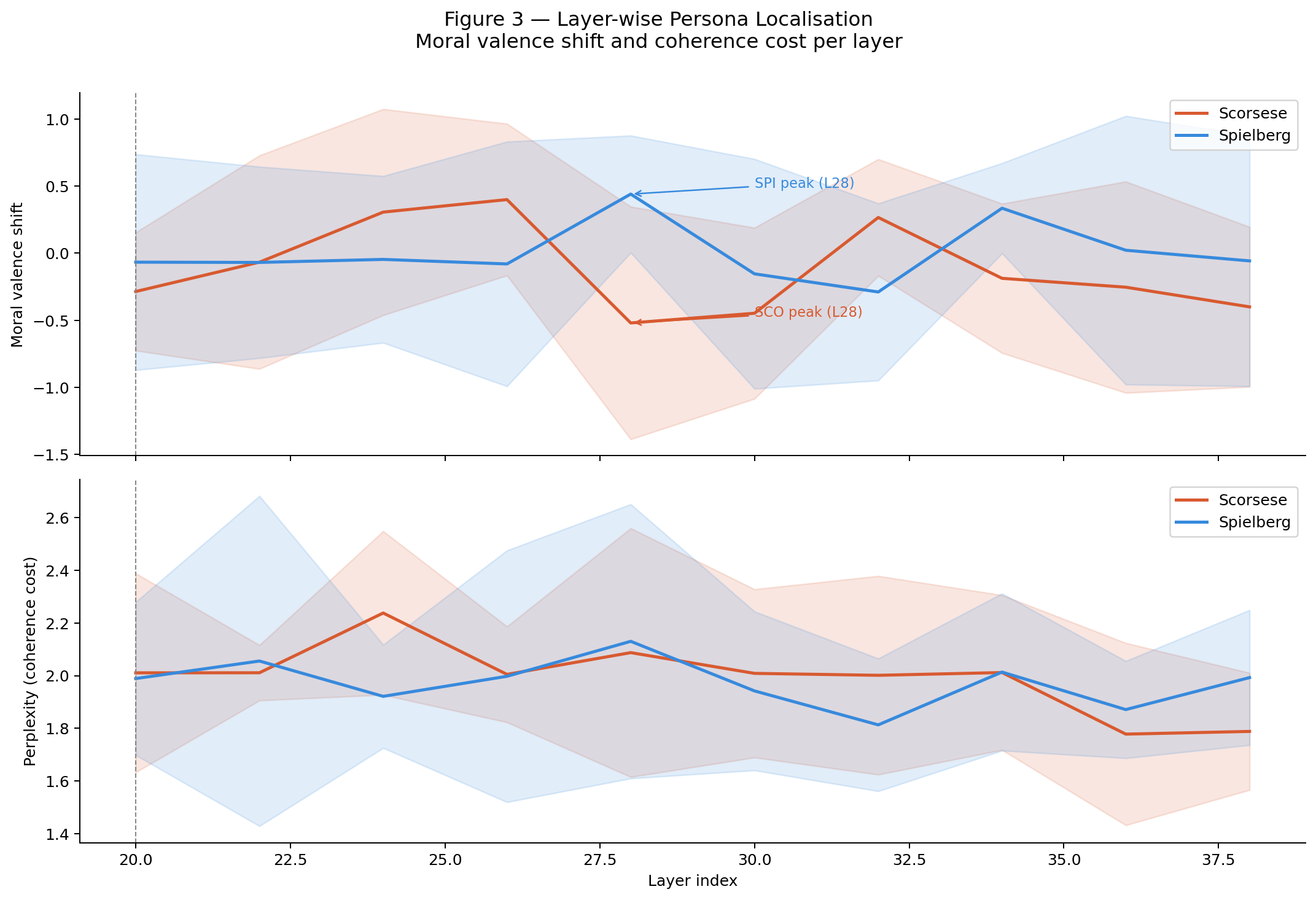}
    \caption{%
      \textbf{Layer-wise persona localisation} ($\lambda=1.0$, single-layer
      ablation). \textbf{Top}: Moral valence shift when steering is applied
      to one layer at a time. Both directors' effects peak at layer~28:
      Scorsese achieves the largest negative shift
      ($\Delta\text{MV}\approx -0.50$) and Spielberg the largest positive
      shift ($\Delta\text{MV}\approx +0.47$). The dashed vertical line
      marks layer~20, the start of the steering range $\mathcal{L}$.
      \textbf{Bottom}: Coherence cost per layer; both directors impose
      comparable, smoothly varying perplexity penalties with no sharp
      localisation, dissociating moral encoding from general perturbation
      sensitivity.
    }
    \label{fig:layer}
  \end{center}
  \vskip -0.2in
\end{figure*}

\paragraph{Shared moral-tone substrate at L28.}
Both Scorsese and Spielberg effects peak at layer~28 ($\approx 70\%$ model
depth), with peak magnitudes of $-0.50$ and $+0.47$, roughly anti-symmetric.
This co-localisation implies that layer~28 functions as a
\emph{shared moral-tone processing hub}: a layer whose residual-stream
representations are disproportionately causally involved in determining the
moral character of generated text, for both directions along the valence axis.

\begin{remark}
The near-equality of peak layer and approximate anti-symmetry of peak
magnitudes ($-0.50$ vs.\ $+0.47$) are consistent with both directors' vectors
lying along a single approximately linear moral-valence direction in
residual-stream space, with Spielberg and Scorsese defining its opposite
poles~\citep{elhage2022toy}.
\end{remark}

The bottom panel of \Cref{fig:layer} shows that coherence cost varies
smoothly across layers without a corresponding peak at L28. This dissociation
between moral-valence localisation and coherence cost confirms that the L28
signal reflects specific moral encoding rather than a general sensitivity to
activation perturbation, and justifies L28 as the primary target for
targeted single-layer interventions in future work.

\subsection{Style Metrics Across Steering Conditions}
\label{sec:style}

\Cref{fig:style} presents five surface style metrics as a heatmap over
$(\alpha, \lambda)$ conditions.

\begin{figure*}[t]
  \vskip 0.2in
  \begin{center}
    \includegraphics[width=\textwidth]{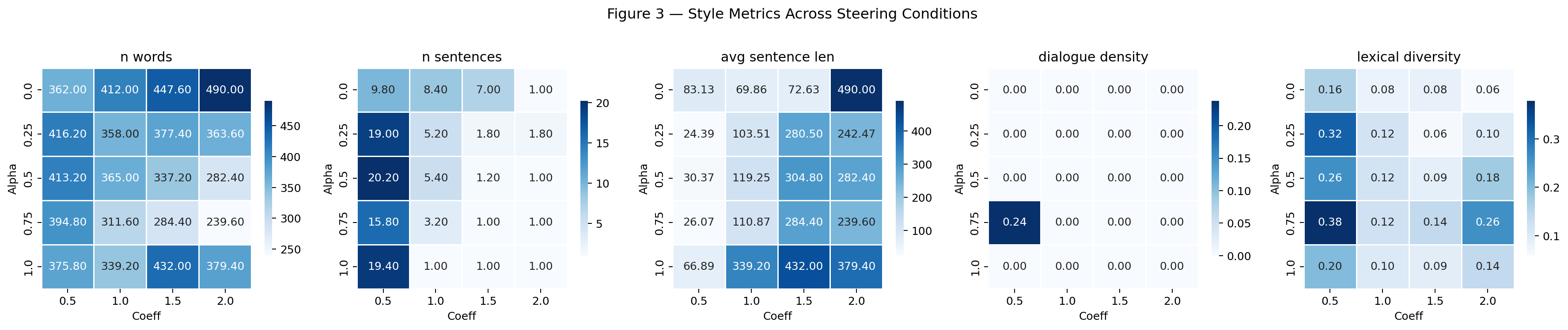}
    \caption{%
      \textbf{Style metrics across $(\alpha, \lambda)$ steering conditions.}
      Each cell reports the mean value of the metric over $G=50$ generations.
      Word count and sentence count decrease with increasing $\lambda$ in the
      Scorsese-heavy regime. Average sentence length spikes anomalously in
      the collision regime ($\alpha\in\{0.25,0.5\}$, high $\lambda$),
      reaching 280--305 tokens per sentence, a signature of syntactic
      degeneration. Dialogue density is almost entirely absent except at
      $(\alpha=0.75, \lambda=0.5)$. Lexical diversity (TTR) peaks at
      $\alpha=0.75$, the point of maximum moral tension.
    }
    \label{fig:style}
  \end{center}
  \vskip -0.2in
\end{figure*}

Several patterns are noteworthy. First, \textbf{word count} and
\textbf{sentence count} decrease monotonically with $\lambda$ at high
$\alpha$ ($\geq 0.75$), suggesting that strong Scorsese-biased steering
causes the model to truncate generations early, consistent with dark
narrative content reaching a natural termination point sooner and with fewer
redemptive elaborations. Second, \textbf{average sentence length} spikes in
the intermediate collision regime ($\alpha\in\{0.25,0.5\}$, $\lambda\geq1.5$),
reaching pathological values of 280--305 tokens per sentence---run-on
sentences produced when the collision vector disrupts syntactic planning while
maintaining some generation fluency. Third, \textbf{dialogue density} is
almost entirely suppressed except at $(\alpha=0.75, \lambda=0.5)$, suggesting
that dialogue---a surface-level Spielberg-associated feature---emerges only
when Scorsese content is gently introduced rather than in either extreme. Finally,
\textbf{lexical diversity} (TTR) peaks at $\alpha=0.75$, which may reflect
the moral complexity of the near-Scorsese collision producing more varied
vocabulary than either pure-director extreme.

\subsection{Directional Dominance}
\label{sec:dominance}

\Cref{fig:dominance} presents the directional dominance analysis:
$P_\phi(\mathrm{SPL} \mid x)$, the probability that the stylometric
classifier assigns a generated text to the Spielberg class.

\begin{figure*}[t]
  \vskip 0.2in
  \begin{center}
    \includegraphics[width=\textwidth]{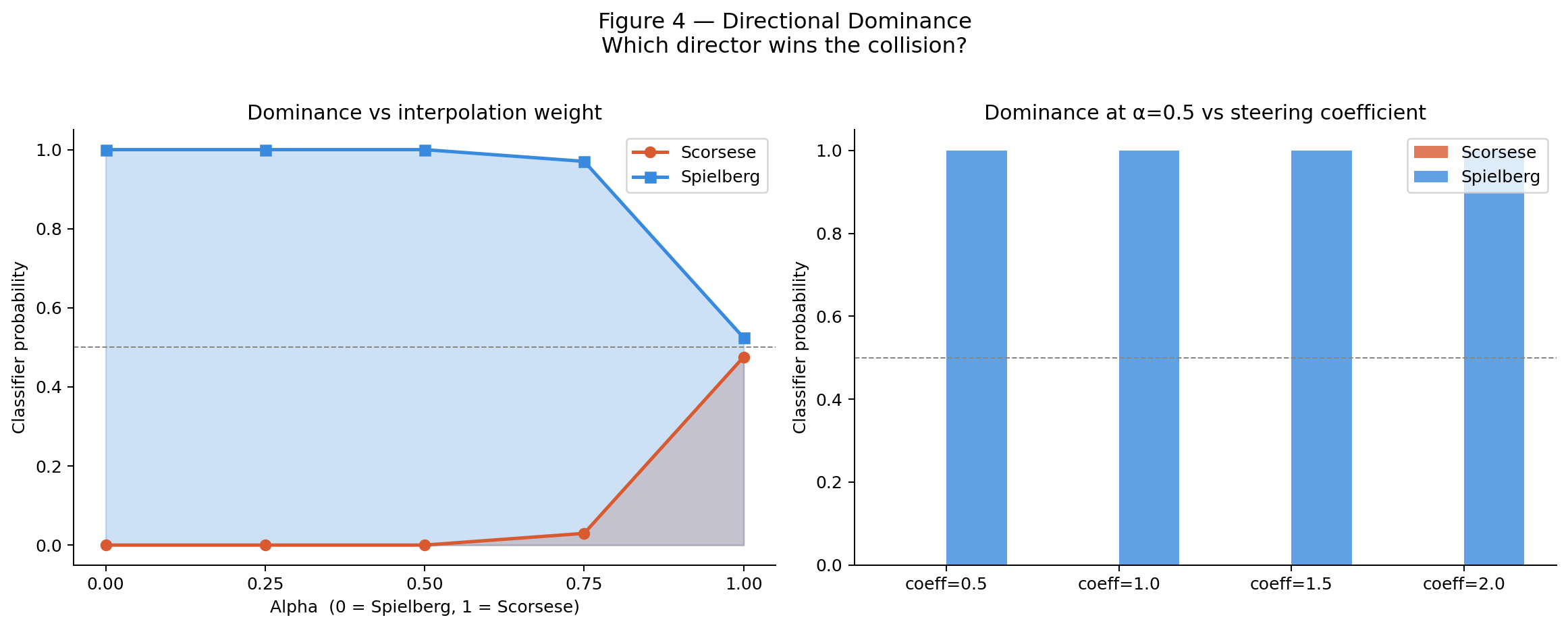}
    \caption{%
      \textbf{Directional dominance.} \textbf{Left}: $P_\phi(\mathrm{SPL})$
      vs.\ $\alpha$ at $\lambda=1.0$. The Spielberg classifier probability
      remains at $\approx 1.0$ through $\alpha=0.50$ and only crosses the
      0.5 decision boundary at $\alpha=1.0$. \textbf{Right}: Dominance at
      $\alpha=0.5$ across all $\lambda$; Spielberg dominates
      $P_\phi\approx 1.0$ regardless of steering strength, confirming
      that the dominance is not a coefficient artefact.
    }
    \label{fig:dominance}
  \end{center}
  \vskip -0.2in
\end{figure*}

The result is unambiguous: Spielberg's stylometric signature dominates output
across almost the entire interpolation range. At $\alpha \in \{0.0, 0.25,
0.50\}$, the classifier assigns $P_\phi(\mathrm{SPL}) \approx 1.0$. At
$\alpha = 0.75$, dominance persists at $P_\phi \approx 0.97$. Only at
$\alpha = 1.0$ (pure Scorsese) does the classifier approach the decision
boundary, with $P_\phi \approx 0.49$. The right panel confirms that this
dominance holds across all $\lambda \in \{0.5, 1.0, 1.5, 2.0\}$ at
$\alpha = 0.5$, ruling out any explanation based on Scorsese's vector being
simply weaker at a particular coefficient.

\paragraph{Mechanisms of dominance.}
We propose two complementary explanations. First, a \emph{prior bias}:
pretraining corpora are likely richer in optimistic, prosocial narrative
content than in morally dark content~\citep{ouyang2022training}, biasing the
prior distribution of the residual stream toward Spielberg-compatible
representations. Second, an \emph{alignment amplification} effect: the
model's instruction-tuning and RLHF fine-tuning reinforce positive,
prosocial generation, effectively functioning as a persistent low-amplitude
Spielberg-direction prior that the Scorsese steering vector must overcome.
This second mechanism is consistent with the finding that Scorsese dominance
only emerges at $\alpha = 1.0$, precisely when no Spielberg component
remains.

\paragraph{Moral scatter decomposition.}
\Cref{fig:scatter} decomposes moral valence into positive ($p^+$) and
negative ($p^-$) moral subscores, revealing the full structure of moral tone
across $\alpha$.

\begin{figure}[t]
  \vskip 0.2in
  \begin{center}
    \includegraphics[width=\columnwidth]{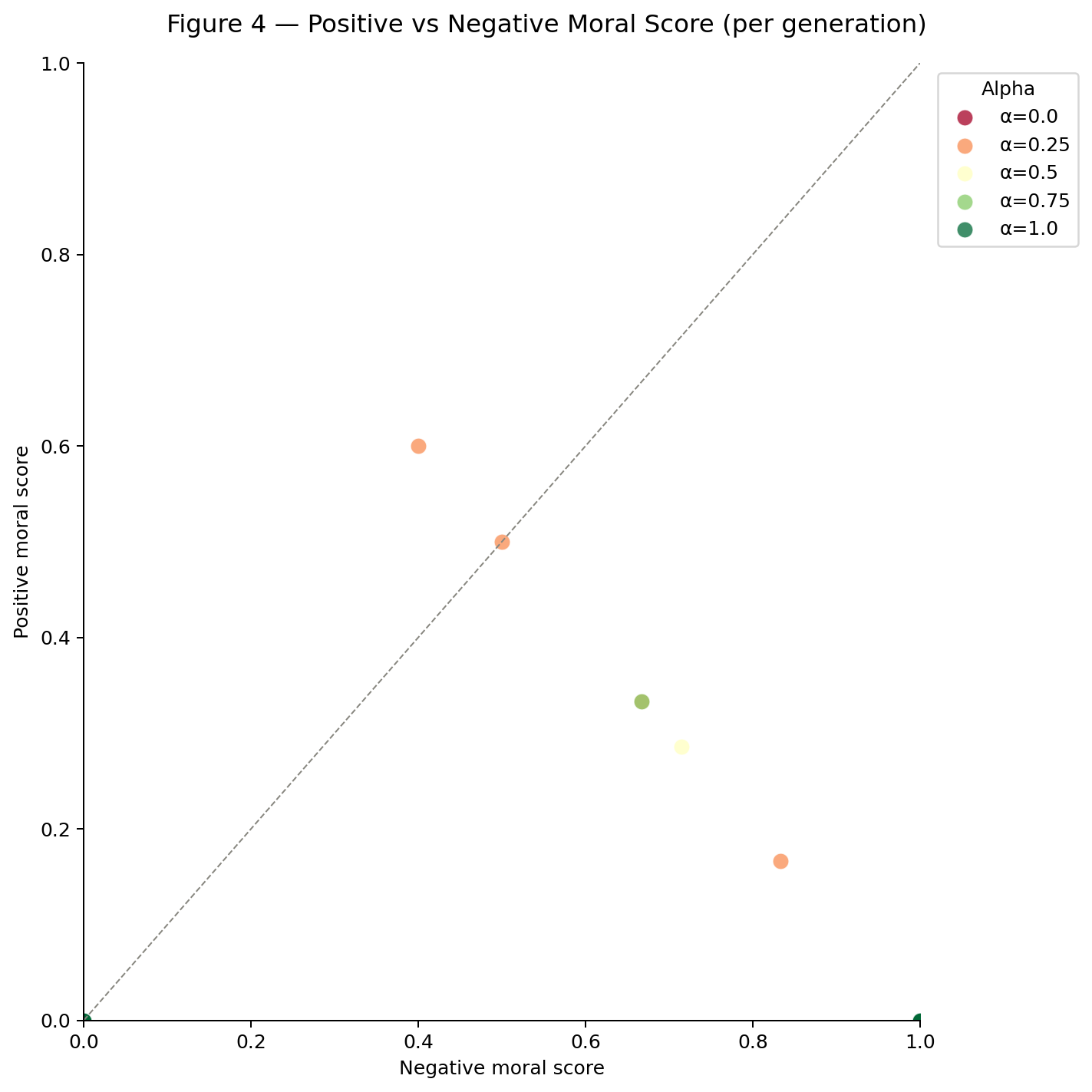}
    \caption{%
      \textbf{Positive vs.\ negative moral score per generation.}
      Each point represents one $\alpha$ condition (averaged across $\lambda$),
      coloured by $\alpha$ value. The dashed diagonal marks the locus of equal
      positive and negative moral content. Moving from $\alpha=0.0$ (dark red)
      to $\alpha=1.0$ (dark green), trajectories descend from low-positive
      / low-negative (morally sparse) through high-positive / high-negative
      (morally rich collision) to zero-positive / maximum-negative (pure
      dark content).
    }
    \label{fig:scatter}
  \end{center}
  \vskip -0.2in
\end{figure}

At $\alpha = 0.0$ (pure Spielberg), both $p^+$ and $p^-$ are near zero:
Spielberg-steered content is morally sparse rather than strongly positive,
suggesting that moral optimism in this regime is conveyed implicitly through
narrative convention rather than explicit moral language. At $\alpha = 0.25$,
content becomes \emph{morally rich}: $p^+ \approx 0.60$ while
$p^- \approx 0.40$, indicating that partial Scorsese introduction creates
genuine moral complexity in which both positive and negative moral language
coexist. This is a form of \emph{moral amplification}---collision produces
more morally explicit text than either pure-director extreme. By $\alpha = 1.0$
(pure Scorsese), $p^+ \to 0$ and $p^- \to 1.0$: positive moral content is
eliminated entirely. The collision regime at $\alpha \approx 0.25$ thus
occupies a unique position below the diagonal in \Cref{fig:scatter},
achieving the highest positive scores while also admitting substantial
negative content.

\subsection{Collision Vector Geometry}
\label{sec:geometry}

\Cref{fig:collision} shows the cosine similarity of the normalised collision
vector $\hat{\mathbf{v}}_\alpha$ to each reference director's unit vector.

\begin{figure}[t]
  \vskip 0.2in
  \begin{center}
    \includegraphics[width=\columnwidth]{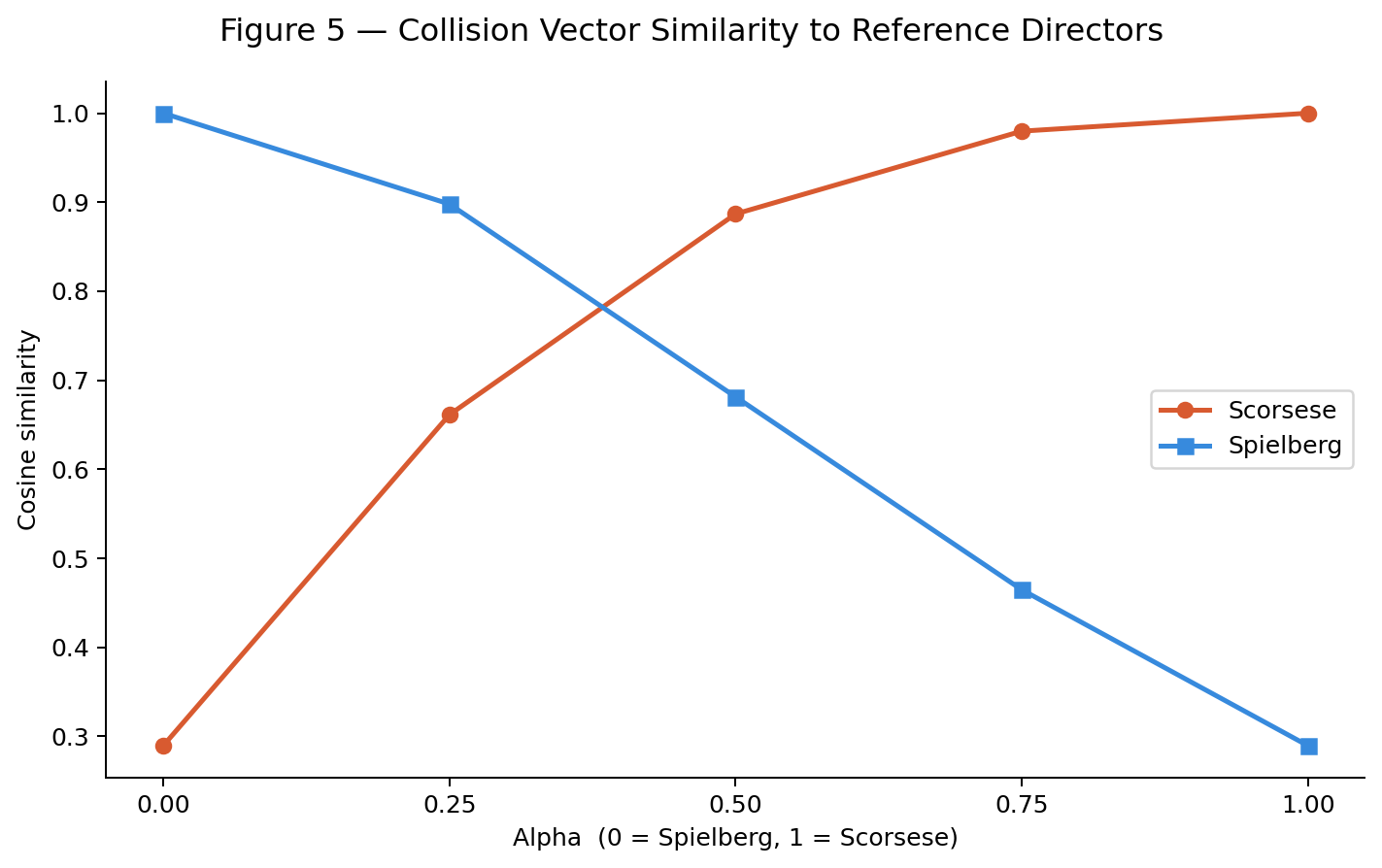}
    \caption{%
      \textbf{Collision vector similarity to reference directors.} Cosine
      similarity of $\hat{\mathbf{v}}_\alpha$ to $\hat{\mathbf{v}}_\mathrm{SPL}$
      (blue) and $\hat{\mathbf{v}}_\mathrm{SCO}$ (orange) as a function of
      $\alpha$. The curves cross near $\alpha \approx 0.4$, where both
      similarities are $\approx 0.77$. The non-zero baseline similarity to
      Scorsese at $\alpha=0$ ($\approx 0.29$) confirms the two vectors are
      non-orthogonal. The Scorsese curve is convex and the Spielberg curve
      concave, consistent with the directional dominance results.
    }
    \label{fig:collision}
  \end{center}
  \vskip -0.2in
\end{figure}

The two cosine-similarity curves follow smooth, monotone trajectories from
$(s_\mathrm{SPL}, s_\mathrm{SCO}) = (1.0, 0.29)$ at $\alpha = 0$ to
$(0.29, 1.0)$ at $\alpha = 1$. Their crossing near $\alpha \approx 0.4$
defines the \emph{collision midpoint}: the interpolation weight at which the
collision vector is equidistant in cosine space from both reference directors
($s_\mathrm{SPL} \approx s_\mathrm{SCO} \approx 0.77$). The crossing occurs
below $\alpha = 0.5$ because the Spielberg vector has a higher baseline
influence on the collision trajectory, consistent with its directional
dominance in output space (\S\ref{sec:dominance}).

The non-zero baseline similarity at $\alpha = 0$ to the Scorsese direction
($\approx 0.29$) confirms that the two director vectors are not orthogonal:
they share a representational subspace, likely corresponding to features
common to both cinematic traditions (violence, moral struggle, human
ambition). The Scorsese similarity follows a convex trajectory (accelerating
gain) while the Spielberg trajectory is concave (decelerating loss),
reflecting that the collision vector acquires Scorsese character faster than
it relinquishes Spielberg character---again consistent with the dominance
asymmetry observed behaviourally.

\section{Discussion}
\label{sec:discussion}

\paragraph{Asymmetric representational power and alignment implications.}
Spielberg's robust directional dominance across $\alpha \leq 0.75$ implies
that prosocial, optimistic content is more deeply entrenched in the model's
residual stream than morally dark content---a direct consequence of
pretraining data distribution and RLHF alignment. From a safety perspective,
this is reassuring: models fine-tuned for helpfulness appear to maintain a
strong internal prior toward positive moral content that resists moderate
perturbation. Practically, it implies that steering toward undesirable content
requires substantially larger activation magnitudes than steering toward
benign content, raising the empirical bar for adversarial activation-level
attacks.

\paragraph{The coherence valley as a design principle.}
Proposition~\ref{prop:norm} provides a geometric explanation for why
intermediate collision points preserve coherence better than single-director
extreme steering: the interpolated vector has strictly smaller $\ell_2$ norm
(and thus smaller effective perturbation magnitude) whenever the two
component vectors are non-antipodal ($\rho > -1$). This suggests a
practical design principle: \emph{when high steering magnitude is required,
interpolating between two semantically opposed directions can preserve
coherence better than applying a single direction at equivalent total
magnitude}. The principle generalises beyond creative persona to any pair of
non-antipodal steering vectors, and we expect it to hold for domains such as
safety-helpfulness trade-offs~\citep{zou2023representation}.

\paragraph{Layer 28 as a moral processing hub.}
The co-localisation of both directors' peak moral effects at layer~28
($\approx 70\%$ depth) is consistent with the general finding that
high-level semantic and conceptual processing occurs in the upper-middle
layers of large
transformers~\citep{elhage2022toy,templeton2024scaling}. The anti-symmetric
peak magnitudes support the linear representation hypothesis for moral
valence: a single continuous direction, not two separate features. Layer~28
warrants targeted future study via causal tracing and sparse autoencoder
feature decomposition.

\paragraph{Moral amplification through collision.}
The observation that $\alpha = 0.25$ produces \emph{richer} moral content
(high $p^+$ and high $p^-$ simultaneously) than either pure extreme suggests
that creative collision is generative, not merely interpolative. The collision
activates both poles of the moral dimension simultaneously, producing text
with moral complexity absent from either pure-director output. This mirrors
the literary principle that narrative tension arises from the juxtaposition
of opposing moral forces, and suggests a practical creative application:
deliberately injecting a small opposing-direction component during generation
to increase moral depth and narrative tension without fully abandoning the
dominant persona.

\section{Conclusion}
\label{sec:conclusion}

We introduced Creative Collision, a framework for studying the competition
between opposing directorial persona vectors in transformer residual streams,
using Spielberg and Scorsese as semantically grounded opposing poles of a
moral valence axis. Our experiments revealed: (i)~robust Spielberg
directional dominance across the interpolation range; (ii)~a coherence valley
at intermediate collision points explained geometrically by norm reduction
of the interpolated vector; (iii)~shared moral-tone localisation at layer~28
for both directors; (iv)~moral amplification through collision, producing
richer moral content than either pure extreme; and (v)~a smooth cosine arc
in the collision vector's trajectory with a crossing point at $\alpha\approx0.4$.
Together, these findings advance our understanding of how competing semantic
directions interact in transformer activation space and open new directions
for controllable creative generation, mechanistic interpretability of moral
encoding, and value-aligned narrative synthesis.

\section*{Impact Statement}

This paper investigates the internal mechanisms by which large language
models encode and compete over the moral and aesthetic properties of generated
text. Our primary contributions are scientific: characterising the geometry
of directorial steering vector competition and its effects on generation
quality and moral tone. The capability to steer LLMs toward morally dark
content exists prior to this work; our finding that such steering is resisted
by alignment fine-tuning---and requires substantially larger magnitudes than
benign steering---we view as a net positive contribution to the safety
literature. Researchers extending these techniques should nonetheless exercise
care in adversarial or red-teaming contexts, and we emphasise that the moral
amplification effect (\S\ref{sec:dominance}) could in principle be exploited
to generate content with higher emotional intensity; we encourage the
community to study defences against such use.

\bibliography{example_paper}

@article{zou2023representation,
  title={Representation Engineering: A Top-Down Approach to AI Transparency}, 
      author={Andy Zou and Long Phan and Sarah Chen and James Campbell and Phillip Guo and Richard Ren and Alexander Pan and Xuwang Yin and Mantas Mazeika and Ann-Kathrin Dombrowski and Shashwat Goel and Nathaniel Li and Michael J. Byun and Zifan Wang and Alex Mallen and Steven Basart and Sanmi Koyejo and Dawn Song and Matt Fredrikson and J. Zico Kolter and Dan Hendrycks},
      year={2025},
      eprint={2310.01405},
      archivePrefix={arXiv},
      primaryClass={cs.LG},
      url={https://arxiv.org/abs/2310.01405}, 
}

@article{turner2023activation,
  title={Steering Language Models With Activation Engineering}, 
      author={Alexander Matt Turner and Lisa Thiergart and Gavin Leech and David Udell and Juan J. Vazquez and Ulisse Mini and Monte MacDiarmid},
      year={2024},
      eprint={2308.10248},
      archivePrefix={arXiv},
      primaryClass={cs.CL},
      url={https://arxiv.org/abs/2308.10248}, 
}

@inproceedings{li2023inference,
  title={Inference-Time Intervention: Eliciting Truthful Answers from a Language Model}, 
      author={Kenneth Li and Oam Patel and Fernanda Viégas and Hanspeter Pfister and Martin Wattenberg},
      year={2024},
      eprint={2306.03341},
      archivePrefix={arXiv},
      primaryClass={cs.LG},
      url={https://arxiv.org/abs/2306.03341}, 
}

@article{rimsky2024steering,
  title={Steering Llama 2 via Contrastive Activation Addition}, 
      author={Nina Panickssery and Nick Gabrieli and Julian Schulz and Meg Tong and Evan Hubinger and Alexander Matt Turner},
      year={2024},
      eprint={2312.06681},
      archivePrefix={arXiv},
      primaryClass={cs.CL},
      url={https://arxiv.org/abs/2312.06681}, 
}

@article{subramani2022extracting,
  title={Extracting Latent Steering Vectors from Pretrained Language Models}, 
      author={Nishant Subramani and Nivedita Suresh and Matthew E. Peters},
      year={2022},
      eprint={2205.05124},
      archivePrefix={arXiv},
      primaryClass={cs.CL},
      url={https://arxiv.org/abs/2205.05124}, 
}

@article{burns2023discovering,
  title={Discovering Latent Knowledge in Language Models Without Supervision}, 
      author={Collin Burns and Haotian Ye and Dan Klein and Jacob Steinhardt},
      year={2024},
      eprint={2212.03827},
      archivePrefix={arXiv},
      primaryClass={cs.CL},
      url={https://arxiv.org/abs/2212.03827}, 
}

@article{elhage2022toy,
  title={Toy Models of Superposition},
   author={Elhage, Nelson and Hume, Tristan and Olsson, Catherine and Schiefer, Nicholas and Henighan, Tom and Kravec, Shauna and Hatfield-Dodds, Zac and Lasenby, Robert and Drain, Dawn and Chen, Carol and Grosse, Roger and McCandlish, Sam and Kaplan, Jared and Amodei, Dario and Wattenberg, Martin and Olah, Christopher},
   year={2022},
   journal={Transformer Circuits Thread},
   note={\url{https://transformer-circuits.pub/2022/toy_model/index.html}}
}

@article{templeton2024scaling,
  title={Scaling Monosemanticity: Extracting Interpretable Features from Claude 3 Sonnet},
       author={Templeton, Adly and Conerly, Tom and Marcus, Jonathan and Lindsey, Jack and Bricken, Trenton and Chen, Brian and Pearce, Adam and Citro, Craig and Ameisen, Emmanuel and Jones, Andy and Cunningham, Hoagy and Turner, Nicholas L and McDougall, Callum and MacDiarmid, Monte and Freeman, C. Daniel and Sumers, Theodore R. and Rees, Edward and Batson, Joshua and Jermyn, Adam and Carter, Shan and Olah, Chris and Henighan, Tom},
       year={2024},
       journal={Transformer Circuits Thread},
       url={https://transformer-circuits.pub/2024/scaling-monosemanticity/index.html}
    }

@inproceedings{mikolov2013distributed,
  title={Distributed Representations of Words and Phrases and their Compositionality}, 
      author={Tomas Mikolov and Ilya Sutskever and Kai Chen and Greg Corrado and Jeffrey Dean},
      year={2013},
      eprint={1310.4546},
      archivePrefix={arXiv},
      primaryClass={cs.CL},
      url={https://arxiv.org/abs/1310.4546}, 
}

@inproceedings{brown2020language,
  title={Language Models are Few-Shot Learners}, 
      author={Tom B. Brown and Benjamin Mann and Nick Ryder and Melanie Subbiah and Jared Kaplan and Prafulla Dhariwal and Arvind Neelakantan and Pranav Shyam and Girish Sastry and Amanda Askell and Sandhini Agarwal and Ariel Herbert-Voss and Gretchen Krueger and Tom Henighan and Rewon Child and Aditya Ramesh and Daniel M. Ziegler and Jeffrey Wu and Clemens Winter and Christopher Hesse and Mark Chen and Eric Sigler and Mateusz Litwin and Scott Gray and Benjamin Chess and Jack Clark and Christopher Berner and Sam McCandlish and Alec Radford and Ilya Sutskever and Dario Amodei},
      year={2020},
      eprint={2005.14165},
      archivePrefix={arXiv},
      primaryClass={cs.CL},
      url={https://arxiv.org/abs/2005.14165}, 
}

@inproceedings{ouyang2022training,
  title={Training language models to follow instructions with human feedback}, 
      author={Long Ouyang and Jeff Wu and Xu Jiang and Diogo Almeida and Carroll L. Wainwright and Pamela Mishkin and Chong Zhang and Sandhini Agarwal and Katarina Slama and Alex Ray and John Schulman and Jacob Hilton and Fraser Kelton and Luke Miller and Maddie Simens and Amanda Askell and Peter Welinder and Paul Christiano and Jan Leike and Ryan Lowe},
      year={2022},
      eprint={2203.02155},
      archivePrefix={arXiv},
      primaryClass={cs.CL},
      url={https://arxiv.org/abs/2203.02155}, 
}

@article{qwen2025,
  title={Qwen2.5 Technical Report}, 
      author={Qwen and : and An Yang and Baosong Yang and Beichen Zhang and Binyuan Hui and Bo Zheng and Bowen Yu and Chengyuan Li and Dayiheng Liu and Fei Huang and Haoran Wei and Huan Lin and Jian Yang and Jianhong Tu and Jianwei Zhang and Jianxin Yang and Jiaxi Yang and Jingren Zhou and Junyang Lin and Kai Dang and Keming Lu and Keqin Bao and Kexin Yang and Le Yu and Mei Li and Mingfeng Xue and Pei Zhang and Qin Zhu and Rui Men and Runji Lin and Tianhao Li and Tianyi Tang and Tingyu Xia and Xingzhang Ren and Xuancheng Ren and Yang Fan and Yang Su and Yichang Zhang and Yu Wan and Yuqiong Liu and Zeyu Cui and Zhenru Zhang and Zihan Qiu},
      year={2025},
      eprint={2412.15115},
      archivePrefix={arXiv},
      primaryClass={cs.CL},
      url={https://arxiv.org/abs/2412.15115}, 
}

@article{hendrycks2021aligning,
  title={Aligning AI With Shared Human Values}, 
      author={Dan Hendrycks and Collin Burns and Steven Basart and Andrew Critch and Jerry Li and Dawn Song and Jacob Steinhardt},
      year={2023},
      eprint={2008.02275},
      archivePrefix={arXiv},
      primaryClass={cs.CY},
      url={https://arxiv.org/abs/2008.02275}, 
}

@inproceedings{yang2023doc,
  title={DOC: Improving Long Story Coherence With Detailed Outline Control}, 
      author={Kevin Yang and Dan Klein and Nanyun Peng and Yuandong Tian},
      year={2023},
      eprint={2212.10077},
      archivePrefix={arXiv},
      primaryClass={cs.CL},
      url={https://arxiv.org/abs/2212.10077}, 
}

@article{mirowski2023co,
  title={Co-Writing Screenplays and Theatre Scripts with Language Models: An Evaluation by Industry Professionals}, 
      author={Piotr Mirowski and Kory W. Mathewson and Jaylen Pittman and Richard Evans},
      year={2022},
      eprint={2209.14958},
      archivePrefix={arXiv},
      primaryClass={cs.HC},
      url={https://arxiv.org/abs/2209.14958}, 
}

@misc{sahoo2026linearprobesdetecttask,
      title={Linear Probes Detect Task Format, Not Reasoning Mode in Language Model Hidden States}, 
      author={Subramanyam Sahoo and Vinija Jain and Aman Chadha and Divya Chaudhary},
      year={2026},
      eprint={2606.02907},
      archivePrefix={arXiv},
      primaryClass={cs.CL},
      url={https://arxiv.org/abs/2606.02907}, 
}
\bibliographystyle{icml2026}

\newpage
\appendix
\onecolumn

\section{Corpus Details}
\label{app:corpus}

\Cref{tab:corpus} summarises the source films, number of passages, and mean
passage length for each partition of the directorial contrast corpus. All
passages are extracted from publicly available plot synopses, screenplay
excerpts in the public domain, or academic film-analysis texts.

\begin{table}[h]
  \caption{Directorial contrast corpus statistics.}
  \label{tab:corpus}
  \begin{center}
    \begin{small}
      \begin{sc}
        \begin{tabular}{llcc}
          \toprule
          Director    & Film                           & Passages & Mean tokens \\
          \midrule
          Spielberg   & E.T.                           & 20       & 128         \\
                      & Schindler's List (redemptive)  & 20       & 143         \\
                      & Jurassic Park                  & 20       & 119         \\
                      & Saving Private Ryan            & 20       & 136         \\
                      & Close Encounters               & 20       & 112         \\
          \midrule
          Scorsese    & Goodfellas                     & 20       & 127         \\
                      & Taxi Driver                    & 20       & 138         \\
                      & The Departed                   & 20       & 141         \\
                      & Raging Bull                    & 20       & 124         \\
                      & Casino                         & 20       & 131         \\
          \midrule
          Neutral     & Diverse genre sample           & 100      & 131         \\
          \bottomrule
        \end{tabular}
      \end{sc}
    \end{small}
  \end{center}
\end{table}

\section{Steering Hyperparameters}
\label{app:hyper}

\begin{table}[h]
  \caption{Steering hyperparameter grid.}
  \label{tab:hyper}
  \begin{center}
    \begin{small}
      \begin{sc}
        \begin{tabular}{ll}
          \toprule
          Hyperparameter                & Values \\
          \midrule
          Mixing coefficient $\alpha$   & $\{0.0, 0.25, 0.50, 0.75, 1.0\}$ \\
          Steering coefficient $\lambda$& $\{0.5, 1.0, 1.5, 2.0\}$ \\
          Layer range $\mathcal{L}$     & $\{20, 21, \ldots, 38\}$ \\
          Generations per condition     & 50 \\
          Max generation tokens         & 200 \\
          Temperature                   & 0.8 \\
          \bottomrule
        \end{tabular}
      \end{sc}
    \end{small}
  \end{center}
\end{table}

\section{Proof of Proposition~\ref{prop:norm} (Extended)}
\label{app:proof}

We provide the full calculation establishing the global minimum at
$\alpha^* = \tfrac{1}{2}$ and the minimum value.

\begin{proof}
Let $f(\alpha) = \|\mathbf{v}_\alpha\|_2^2 = 1 - 2\alpha(1-\alpha)(1-\rho)$.
Then
\[
  f'(\alpha) = -2(1-2\alpha)(1-\rho).
\]
For $\rho < 1$, $(1-\rho) > 0$, so $f'(\alpha) = 0 \Leftrightarrow \alpha = \tfrac{1}{2}$.
Since $f''(\alpha) = 4(1-\rho) > 0$, this is a global minimum on $\mathbb{R}$
and hence on $[0,1]$. Substituting $\alpha = \tfrac{1}{2}$:
\[
  f\!\left(\tfrac{1}{2}\right)
  = 1 - 2 \cdot \tfrac{1}{2} \cdot \tfrac{1}{2} \cdot (1-\rho)
  = 1 - \tfrac{1-\rho}{2}
  = \tfrac{1+\rho}{2}.
\]
For our empirical $\rho \approx 0.29$:
$\|\mathbf{v}_{1/2}\|_2^2 \approx 0.645$, so
$\|\mathbf{v}_{1/2}\|_2 \approx 0.803$.
\end{proof}

The effective perturbation magnitude applied to the residual stream at each
layer is $\lambda\|\mathbf{v}_\alpha\|_2$. At $\alpha=0.5$ and $\lambda=1.5$,
the effective magnitude is $1.5 \times 0.80 \approx 1.20$, compared to
$1.5 \times 1.0 = 1.5$ for single-director steering at the same $\lambda$---a
20\% reduction that, given the exponential relationship between perturbation
magnitude and perplexity at high $\lambda$, can account for the order-of-magnitude
coherence improvement observed in \Cref{fig:perplexity}.

\end{document}